\documentclass{article}

\PassOptionsToPackage{numbers, compress}{natbib}

\usepackage[preprint]{neurips_2025}

\usepackage[utf8]{inputenc} 
\usepackage[T1]{fontenc}    
\usepackage{hyperref}       
\usepackage{url}            
\usepackage{booktabs}       
\usepackage{amsfonts}       
\usepackage{nicefrac}       
\usepackage{microtype}      
\usepackage{xcolor}         

\usepackage{amsmath,amsthm}
\usepackage{dsfont}
\usepackage{multirow}
\usepackage{bm}

\usepackage{amssymb}
\usepackage{booktabs}
\usepackage{color,xcolor}
\usepackage{makecell}
\usepackage{graphicx}

\newtheorem{theorem}{Theorem}
\usepackage{wrapfig}      
\usepackage{subcaption}   

\usepackage{algorithm}
\usepackage{algpseudocode}
\usepackage{enumitem} 

\title{Fully Kolmogorov-Arnold Deep Model in Medical Image Segmentation}

\author{
	Xingyu Qiu$^{1}$, 
	Xinghua Ma$^{1}$,
	Dong Liang$^{1}$,
	Gongning Luo$^{1}$,\\
	\textbf{Wei Wang$^{2}$,	Kuanquan Wang$^{1}$, Shuo Li$^{3}$} \\
	$^{1}$Harbin Institute of Technology, Harbin, China \\
	$^{2}$Harbin Institute of Technology, Shenzhen, China \\
	$^{3}$Case Western Reserve University, Cleveland, USA \\
}

\begin{document}

	\maketitle
	
	\begin{abstract}
		
		Deeply stacked KANs are practically impossible due to high training difficulties and substantial memory requirements. Consequently, existing studies can only incorporate few KAN layers, hindering the comprehensive exploration of KANs. This study overcomes these limitations and introduces the first fully KA-based deep model, demonstrating that KA-based layers can entirely replace traditional architectures in deep learning and achieve superior learning capacity. 
		Specifically, (1) the proposed Share-activation KAN (SaKAN) reformulates Sprecher’s variant of Kolmogorov-Arnold representation theorem, which achieves better optimization due to its simplified parameterization and denser training samples, to ease training difficulty,
		(2) this paper indicates that spline gradients contribute negligibly to training while consuming huge GPU memory, thus proposes the Grad-Free Spline to significantly reduce memory usage and computational overhead.
		(3) Building on these two innovations, our ALL U-KAN is the first representative implementation of fully KA-based deep model, where the proposed KA and KAonv layers completely replace FC and Conv layers.
		Extensive evaluations on three medical image segmentation tasks confirm the superiority of the full KA-based architecture compared to partial KA-based and traditional architectures, achieving all higher segmentation accuracy.
		Compared to directly deeply stacked KAN, ALL U-KAN achieves ${10\times}$ reduction in parameter count and reduces memory consumption by more than ${20\times}$, unlocking the new explorations into deep KAN architectures.
	\end{abstract}

	\section{Introduction}
	Kolmogorov–Arnold Networks (KANs) \citep{KAN} have emerged as a promising alternative to mainstream architectures such as MLPs and CNNs, and are regarded as a potential high-performance neural network paradigm.
	Built upon the Kolmogorov–Arnold (KA) representation theorem, KANs employ learnable univariate spline functions as edge-based activation functions. This design enhances their nonlinear expressivity, interpretability, and feature representation capabilities that conventional MLPs with fixed activation functions inherently lack. Empirical evidence has further demonstrated the effectiveness of KANs across diverse applications, including medical image segmentation \citep{Ukan}, image generation \citep{LDSB}, and time-series forecasting \citep{Sigkan}.
	These advancements highlight the potential of KANs to serve as a transformative neural network architecture.

	Despite their promising advantages, existing studies cannot explore the benefits of deeply stacked KANs, hindered by high training difficulty and excessive memory demands.
	(1) Training KANs is difficult due to excessive learnable parameters. For identical input–output dimensions, each input dimension in a KAN requires an independent activation function, leading to $O(n_{\text{in}} n_{\text{out}} n_{\text{spline}})$ trainable parameters, which is $n_{\text{spline}}$ times greater than that of a single MLP layer $O(n_{\text{in}} n_{\text{out}})$.
	(2) Training KANs requires excessive GPU memory. Backpropagation necessitates storing complex gradient maps from spline computations, causing memory usage to scale rapidly with network depth and rendering fully KAN-based architectures impractical for deep networks.
	Consequently, current research has been restricted to adding few KAN layers into deep models \citep{Ukan,cheng2025implicit,MedKAFormer} (Fig. \ref{moti}) or testing them within shallow networks \citep{KAN,KANconv,kan2}.
	This paper breaks these barriers hindering deeper KAN and introduces the
	\begin{wrapfigure}{r}{0.65\linewidth} 
		\centering
		\includegraphics[width=1\linewidth]{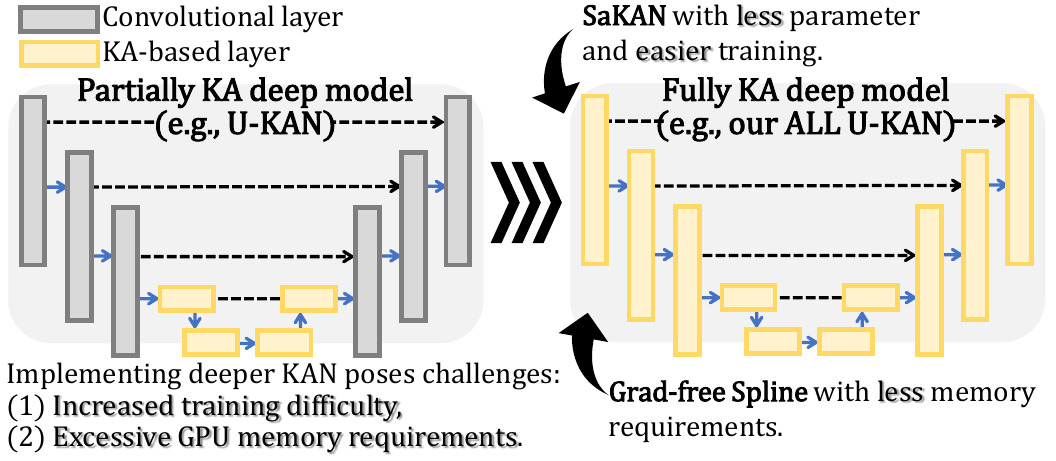}
		\caption{Compared to existing deep methods that are partially based on KAN (e.g., U-KAN), this paper proposes the first deep model composed fully of KA-based layers, exploring its superior performance.}
		\label{moti}
	\end{wrapfigure}
	first fully KA-based deep model, demonstrating its superior learning capability in medical image segmentation. This advancement opens a promising direction for the broader applications of KANs.
	
	In this paper, we propose the first fully KA-based deep model. This design breaks the barriers of high training difficulty and excessive memory demands, enabling the scaling of KANs to deep architectures and establishing a foundation for their application in large-scale models.
	Compared to the vanilla Kolmogorov–Arnold representation theorem, this paper indicates that Sprecher’s variant theorem achieves better optimization in deep learning by providing denser training samples to each learnable function and offers a more compact parameterization with equal representational capacity.
	Specifically, (1) our proposed Shared-activation KAN (SaKAN) reformulates the variant of KA theorem, and eases training difficulty by sharing a single activation function across input dimensions, thereby reducing the number of learnable parameters and increasing the training samples on each activation function.
	(2) Our proposed Grad-Free Spline strategy significantly reduces memory consumption by detaching spline gradients, while demonstrating negligible impact on learning capacity.
	(3) Building on these innovations, our ALL U-KAN is the first representative implementation of fully KA-based deep model, in which all fully connected (FC) and convolutional layers are replaced with KA-based layers ({KA layers} and {KAonv}) introduced in this paper.
	It enhances nonlinear learning capacity for high-dimensional and complex tasks.
	The superior performance of our ALL U-KAN across three medical image segmentation tasks indicates that the fully KA-based deep model possesses stronger learning capabilities than traditional architectures. This advancement will propel the development of a new generation of revolutionary neural network architectures.

	\section{Related Work}
	\subsection{Deeply stacked KANs are practically impossible}
	\label{21}
	Although KANs \citep{KAN} provide stronger nonlinear feature representation than traditional architectures, deeply stacked KANs are practically impossible due to high training difficulties and substantial memory requirements. 
	This section first reviews KANs, then analyzes these limitations compared to MLPs.
	
	KANs are based on the Kolmogorov–Arnold representation theorem, which states that any multivariate continuous function $f(x)$ can be represented as a finite sum of univariate continuous functions $\phi(x_i)$:
	\begin{equation}
		f(x_1, \dots, x_n) = \sum_{i=1}^{2n+1} \Phi_i \Big( \sum_{j=1}^n \phi_{ij}(x_j) \Big)
		\label{eq:KA}
	\end{equation}
	Unlike earlier research \citep{sprecher2002space, koppen2002training, lin1993realization, lai2021kolmogorov}, KANs adopt modern neural network design principles by stacking deeper, producing smoother activation functions, more stable training, and greater representational capacity \citep{KAN}.
	For each KAN layer, given inputs $[x_1, \dots, x_{n_{\text{in}}}]$ and outputs $[y_1, \dots, y_{n_{\text{out}}}]$, the layer is formally expressed as:
	\begin{equation}
		\mathbf{Y}=[y_1, \dots, y_{n_{\text{out}}}] = \text{KAN}([x_1, \dots, x_{n_{\text{in}}}])
		\label{eq:KAN_def}
	\end{equation}
	KAN defines a set of learnable 1D functions:
	\begin{equation}
		\Phi = \{ \phi_{q,p} \}, \quad q = 1, \dots, n_{\text{out}}, \; p = 1, \dots, n_{\text{in}}
		\label{eq:phi_set}
	\end{equation}
	Each output $y_i$ is represented as a weighted sum of univariate functions $\phi_{ij}(x_j)$, where each $\phi_{ij}$ is modeled using B-spline functions. Let $n_{\text{spline}}$ denote the number of spline basis functions. Then the formulation is:
	\begin{equation}
		y_i = \sum_{j=1}^{n_{\text{in}}} \left(\phi_{ij}(x_j)+u_{ij} \, b(x_j) \right)= \sum_{j=1}^{n_{\text{in}}} \left(\sum_{k=1}^{n_{\text{spline}}} v_{ijk} \, B_k(x_j)+u_{ij} \, b(x_j)\right)
		\label{eq:y_phi}
	\end{equation}
	where $b(\cdot)$ is a linear residual (defaulting to SiLU) introduced to improve training stability and $B(\cdot)$ denotes spline basis functions. Both $b(\cdot)$ and $B(\cdot)$ are fixed and parameter-free, while $u_{ij}$ and $v_{ijk}$ are trainable parameters. Consequently, the entire KAN layer can be expressed compactly in matrix form as:
	\begin{equation}
		\mathbf{Y} =
		\underbrace{
			\left(
			\begin{array}{ccc}
				\phi_{1,1}(\cdot)  & \cdots & \phi_{1,n_{\text{in}}}(\cdot) \\
				\vdots  & \ddots & \vdots \\
				\phi_{n_{\text{out}},1}(\cdot)  & \cdots & \phi_{n_{\text{out}},n_{\text{in}}}(\cdot)
			\end{array}
			\right)}_{\Phi}
		\mathbf{X}+
		\underbrace{
			\left(
			\begin{array}{ccc}
				u_{1,1} & \cdots & u_{1,n_{\text{in}}} \\
				\vdots  & \ddots & \vdots \\
				u_{n_{\text{out}},1}  & \cdots & u_{n_{\text{out}},n_{\text{in}}}
			\end{array}
			\right)
			b(\mathbf{X})}_{\text{Linear Residuals}}
		\label{matrixKAN}
	\end{equation}
	
	To investigate why deeply stacked KANs are impossible in practical applications, we analyze their limitations in terms of training difficulties and memory requirements using MLPs as the baseline.  
	
	\textbf{KAN is more difficult to train} due to its more training parameters and fewer available samples. A single KAN layer requires $O(n_{\text{in}} n_{\text{out}} n_{\text{spline}})$ trainable parameters, which is $n_{\text{spline}}\times$ more than an FC layer with only $O(n_{\text{in}} n_{\text{out}})$. Under the default configuration (grid size $=5$, order $=3$), the $n_{\text{spline}} = 8$.
	Since KAN assigns an independent activation function to each input dimension, the available training samples for each parameter are limited to a single input dimension, whereas in MLPs, each parameter is trained on all input dimensions.
	
	\textbf{Training deep KANs requires substantial memory resources.} Compared with conventional architectures, KANs involve significantly more learnable parameters and rely on the computationally intensive de Boor–Cox formula \citep{cox1972numerical,de1978practical} to construct spline basis $B_k(\cdot)$ in Eq. \ref{eq:y_phi}. Consequently, training necessitates the storage of highly complex gradient graphs for backpropagation, which results in excessive memory consumption and reduced training speed.
	
	Notably, although prior studies \citep{MedKAFormer,KANconv, KAN} indicate that KANs achieve superior performance with fewer \emph{overall} parameters, owing to stronger nonlinear representation capacity compared to deeper MLPs,
	Our focus is on scaling KANs to deeper. At equivalent scale, KANs do require more trainable parameters.
	
	%
	
	These limitations hinder the extension of KANs to deeper architectures, preventing them from fully replacing conventional structures in deep models and thereby constraining their broader development and practical applications.
	
	\subsection{Existing Methods cannot exploit deep KAN}
	
	The training difficulty and memory requirements of deeper KANs increase dramatically as shown in Sec. \ref{21}, hindering the exploration of their potential to fully replace traditional architectures and constraining further development. As a result, current research remains limited to shallow KANs or deep models that incorporate only a few KAN layers, providing only a partial assessment of their capabilities.
	
	Existing fully KAN-based methods cannot be scaled to deeper and remain restricted to simple tasks.
	Early research explored the feasibility of constructing novel networks based on the Kolmogorov-Arnold (KA) theorem \citep{sprecher2002space, koppen2002training, lin1993realization, lai2021kolmogorov, leni2011kolmogorov, fakhoury2022exsplinet, montanelli2020error}, but most of them adhered to the two-layer, \(2n+1\) width formulation. 
	KANs \citep{KAN} aim to extend the KA theorem to arbitrary widths and depths, yet high training costs and resource requirements limited them to only $3-5$ layers and simple tasks such as function approximation and MNIST classification.
	KAN 2.0 \citep{kan2} introduced multiplicative nodes and scientific knowledge embedding to showcase shallow KANs for scientific discovery.
	Convolutional KANs \citep{KANconv} incorporated convolutional structures, surpassing CNNs on Fashion-MNIST with $3-5$ layer KAN convolutional networks.
	LDSB \citep{LDSB} leveraged a single-layer KAN to model diffusion path weights in generative diffusion models.
	SigKAN \citep{Sigkan} and GKAN \citep{Gkan} demonstrated that shallow KANs improve performance in time-series forecasting and graph data processing.
	
	For more complex tasks, existing methods improve learning by incorporating few KAN layers within deep models. 
	U-KAN \citep{Ukan} and Implicit U-KAN2. 0 \citep{cheng2025implicit} integrated tokenized KAN and MultiKAN blocks, respectively, into the U-Net architectures, enhancing medical image segmentation. 
	LoKi \citep{LoKi} introduced a shallow hybrid of FC and KAN layers for parameter-efficient fine-tuning, mitigating overfitting in downstream tasks. 
	MedKAFormer \citep{MedKAFormer} embedded KAN layers at critical stages of ViT pipelines to strengthen medical image representations.
	
	Despite these advances, challenges in deep stacking KANs still limit their exploration in deep learning. This work overcomes these challenges and introduces the first fully KA-based deep model, demonstrating the advantages of replacing all FC and conv layers with KA-based layers for complex tasks and establishing a foundation for future research.

	\section{Methods}
	This paper overcomes key computational barriers to make deep stacked KANs practically feasible, and proposes the first fully KA-based deep model. 
	Specifically, 
	the Shared-activation KAN (SaKAN, Sec. \ref{3.11}) is proposed to ease the training difficulty of deep KANs based on the Sprecher’s variant of KA theorem, 
	the Grad-Free Spline (Sec. \ref{3.3}) is introduced to reduce memory consumption during training. Building on these advancements, ALL U-KAN (Sec. \ref{3.3}) is the first representative implementation of fully KA-based deep network, outperforming traditional architectures.
	
	\subsection{Shared-activation KAN eases training difficulty}
	\label{3.11}
	Based on Sprecher’s variant of the KA theorem, the Shared-Activation KAN (SaKAN) is proposed to ease the training difficulty while maintaining the powerful representation.
	
	\subsubsection{Variants of the Kolmogorov–Arnold Representation Theorem}
	\label{3.1}
	
	This section indicates that the Sprecher’s variant \cite{sprecher1965structure} of KA theorem simplifies parameterization and enhances optimization by providing more training samples to each learnable function, overcoming the high training complexity and poor scalability of vanilla KA theorem Eq. \ref{eq:KA}, which forces KANs to learn independent activation functions $\phi_{ij}$ for each input dimension.
	
	In \cite{sprecher1965structure} version, each independent $\phi_{ij}$ in Eq. \ref{eq:KA} is replaced with a unified function $\phi$, with input variables appropriately shifted. This modification greatly simplifies the parameterization by reducing the number of activation functions for training. 
	Specifically, for a multivariate continuous function $f:[0,1]^n \to \mathbb{R}$, there exist real values $\eta$, $\lambda_1,\dots,\lambda_n$, a continuous function $\Phi:\mathbb{R}\to\mathbb{R}$, and an real monotonie increasing function $\phi \in \text{Lip}\left({\ln 2}/{\ln (2N+2)}\right)$, where $N \geq n \geq 2$:
	\begin{equation}
		f(\mathbf{x}) = \sum_{i=1}^{2n+1} \Phi \Bigg( \sum_{j=1}^{n} \lambda_j \, \phi(x_j
		+ \eta i) + i \Bigg)
		\label{KAV}
	\end{equation}
	
	This variant offers distinct advantages over the vanilla KA theorem for deep learning: 
	(1) \textbf{Simpler parameterization.} The KA theorem requires $(n+1)(2n+1)$ nonlinear functions, whereas the variant needs only $2$ functions and $n+1$ real constants to represent the same continuous function.
	(2) \textbf{More training samples for each learnable function.} In the vanilla formulation, each function $\phi_{ij}$ is sampled only from $x_j$. In contrast, the variant employs a unified function $\phi$ that is sampled jointly from all inputs ${x_1,\dots,x_n}$, thereby enabling more effective learning.
	Together, these advantages highlight the suitability of Sprecher’s variant in deep learning.
	
	\subsubsection{SaKAN reformulates the Variant Theorem}
	\label{3.2}
	The SaKAN reformulates the Sprecher’s variant of the KA theorem, and shares a single activation function across input dimensions, with fewer trainable parameters and more training samples per function.
	Consequently, SaKAN eases training difficulty while preserving the strong representational capacity of KANs.
	
	For an input vector $\mathbf{X} = [x_1, \dots, x_{n_{\text{in}}}]$ and an output vector $\mathbf{Y} = [y_1, \dots, y_{n_{\text{out}}}]$, the SaKAN layer is defined as $\mathbf{Y} = \text{SaKAN}(\mathbf{X})$. For each output $y_i$, the formulation is:
	\begin{equation}
		\begin{split}
			y_i &= \sum_{j=1}^{n_{\text{in}}} \Big(\phi_i(x_j) + u_{ij} \, b(x_j)\Big)
			= \sum_{j=1}^{n_{\text{in}}} \Bigg(\sum_{k=1}^{n_{\text{spline}}} v_{ik} \, B_k(x_j) + u_{ij} \, b(x_j)\Bigg)\\
			&= \sum_{k=1}^{n_{\text{spline}}} v_{ik} \sum_{j=1}^{n_{\text{in}}} B_k(x_j) + \sum_{j=1}^{n_{\text{in}}} u_{ij} \, b(x_j)\\
		\end{split}
		\label{SaKAN}
	\end{equation}
	
	The formulation can be further expressed in matrix form:
	\begin{equation}
		\mathbf{Y} =
		\underbrace{
			\left(
			\begin{array}{c}
				\mathbf{1}_{n_{\text{in}}}\, \phi_{1}(\mathbf{X})  \\
				\vdots  \\
				\mathbf{1}_{n_{\text{in}}}\,\phi_{n_{\text{out}}}(\mathbf{X})
			\end{array}
			\right)}_{\Phi}+
		\underbrace{
			\left(
			\begin{array}{ccc}
				u_{1,1} & \cdots & u_{1,n_{\text{in}}} \\
				\vdots  & \ddots & \vdots \\
				u_{n_{\text{out}},1}  & \cdots & u_{n_{\text{out}},n_{\text{in}}}
			\end{array}
			\right)
			b(\mathbf{X})}_{\text{Linear Residuals}}
		\label{matrixSaKAN}
	\end{equation}
	\begin{wrapfigure}{r}{0.38\linewidth} 
		\centering
		\includegraphics[width=1\linewidth]{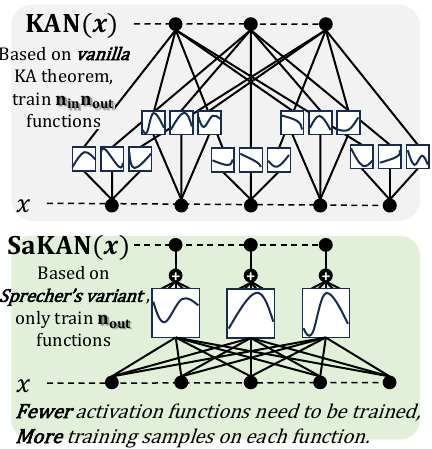}
		\caption{KAN vs. our SaKAN}
		\label{SaKAN1}
		\vspace{-1em}
	\end{wrapfigure}
	
	where $\mathbf{1}_{n_{\text{in}}} := [1,\ldots,1] \in \mathbb{R}^{1\times n_{\text{in}}}$ and $\mathbf{X} \in \mathbb{R}^{n_{\text{in}}\times1}$.
	
	Compared to vanilla KAN (Eq. \ref{matrixKAN}), SaKAN can be regarded as a simplified version, while maintaining the equivalent representational capacity, as demonstrated by the KA variants in Sec. \ref{3.1}.
	SaKAN reduces the number of nonlinear learnable functions $\phi$ from $n_{\text{in}}\times n_{\text{out}}$ to $n_{\text{out}}$, reducing training parameters. 
	By sharing each $\phi$ across all input dimensions rather than a single dimension, SaKAN increases the number of training samples per function, as illustrated in Fig. \ref{SaKAN1} and Eq. \ref{matrixSaKAN}.
	
	Compared with Sprecher’s variant of the KA theorem, SaKAN introduces three key modifications that enhance generalization and representational power: 
	(1) SaKAN replaces the shifted unified function $\phi(x_j + \mu i)$ in Eq. \ref{KAV} with independent functions $\phi_i(x_j)$.  
	(2) SaKAN introduces linear residuals $u_{ij} , b(x_j)$ to represent the additive term $i$ in Eq. \ref{KAV}. The (1) and (2) extend the representational scope to fully cover the original variant.
	(3) SaKAN simplifies all coefficients $\lambda_j$ to $1$, which can be offset by deeper stacking. 
	This modification not only preserves the order sensitivity of $\phi$, as its linear residual remains responsive to input order, but also empirically mitigates overfitting, reduces gradient-map complexity, and significantly decreases memory consumption.
	These adaptations enable SaKAN to retain the theoretical strengths of the KA variant while making it substantially more scalable and efficient for deep learning applications.

	\subsection{Grad-Free Spline reduce memory usage}
	\label{3.3}
	The proposed Grad-Free Spline detaches the spline gradients during training as shown in Fig.~\ref{fig:wrap_subfig1}, to overcome a key challenge in stacking KANs deeper, namely the substantial memory overhead caused by storing complex gradient graphs. Theorem \ref{th1} demonstrates that it maintains both training efficiency and representational capacity.
	
	
	\begin{theorem}[Grad-Free Spline Preserves Optimization]
		\label{th1}
		Using Grad-Free Spline at layer $L$ has minimal impact on the optimization of current or preceding layers $L-1$.
		Let layer $L$ of a KAN have learnable weights $u^L$ and $v^L$, with input $\mathbf{x}^L$ and output $\mathbf{y}$, and previous layer input $\mathbf{x}^{L-1}$, expressed as $\mathbf{x}^{L-1} \rightarrow \mathbf{x}^L \rightarrow \mathbf{y}$. 
	\end{theorem}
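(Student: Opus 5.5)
I will make the statement precise by separating the gradient flow at layer $L$ into a weight part and an input part. Take the SaKAN form (Eq.~\ref{SaKAN}), $y_i=\sum_k v^L_{ik}\sum_j B_k(x^L_j)+\sum_j u^L_{ij}b(x^L_j)$; the vanilla form (Eq.~\ref{eq:y_phi}) is handled the same way. Grad-Free Spline means the spline values $B_k(x^L_j)$ are treated as constants in the backward pass: they are evaluated in the forward pass, but no graph through the de Boor--Cox recursion is stored.

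\textbf{Step 1: the current layer $L$ is unaffected.} I will show that the gradients with respect to $u^L$ and $v^L$ are \emph{exactly} unchanged. We have $\partial y_i/\partial v^L_{ik}=\sum_j B_k(x^L_j)$ and $\partial y_i/\partial u^L_{ij}=b(x^L_j)$. Neither expression needs the derivative of $B_k$, only its forward value, which is still computed. So for any loss $\mathcal{L}$, the chain rule gives the same $\partial\mathcal{L}/\partial v^L$ and $\partial\mathcal{L}/\partial u^L$ with or without detaching. This settles the claim for layer $L$ and shows the memory saving costs nothing there.

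\textbf{Step 2: the preceding layers.} The only change is in $\partial\mathcal{L}/\partial x^L_j$. The full gradient is
\begin{equation*}
\frac{\partial\mathcal{L}}{\partial x^L_j}=\sum_i\frac{\partial\mathcal{L}}{\partial y_i}\Big(\sum_k v^L_{ik}B_k'(x^L_j)+u^L_{ij}b'(x^L_j)\Big),
\end{equation*}
while Grad-Free Spline keeps only the residual term $u^L_{ij}b'(x^L_j)$. Propagating through $\partial x^L/\partial x^{L-1}$ gives the gradients for $u^{L-1}$ and $v^{L-1}$, so the error on those gradients is the spline term times the Jacobian of layer $L-1$. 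I will argue that this error is small in two ways:
\begin{itemize}[leftmargin=*]
\item[(a)] The retained residual path is nonzero, since $b'=\mathrm{SiLU}'$ is bounded away from pathological values on the relevant range, so a descent signal still reaches layer $L-1$. The detached update stays a descent direction whenever the inner product of the true and approximate gradients is positive, which holds if $\|\sum_k v_{ik}B_k'\|$ is dominated by $|u_{ij}b'|$.
\item[(b)] For B-splines, $B_k'$ is itself a combination of lower-order B-splines scaled by $1/h$, where $h$ is the grid spacing, with $\sum_k B_k=1$. The spline derivative term is therefore bounded by $\max_k|v_{ik}-v_{i,k-1}|/h$. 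It is also small when the learned spline coefficients are smooth, which is the common regime under standard small initialization of $v$ (KAN initializes splines near zero).
\end{itemize}

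\textbf{Main obstacle.} The hard part is Step 2: turning ``minimal impact'' into a real bound, because no general inequality forces the spline term to be small. I would state the result conditionally. The relative gradient error for layers $\le L-1$ is bounded by $C\,\|\Delta v\|/(h\,\|u\|\,\inf|b'|)$, and the cosine between the true and detached gradients is positive under that ratio being below $1$. I would then point to the empirical comparison as evidence that this regime holds in practice.
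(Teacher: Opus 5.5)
Your Step 1 coincides with the paper's. The gradients $\partial y_i/\partial v^L_{ik}=\sum_j B_k(x^L_j)$ and $\partial y_i/\partial u^L_{ij}=b(x^L_j)$ use only forward spline values, so detaching is exact at layer $L$.

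Step 2 starts from the same split of $\partial\mathcal{L}/\partial\mathbf{x}^L$ into a spline part $G_S$ and a residual part $G_L$. Your per-coordinate form $\partial\mathcal{L}/\partial x^L_j=\sum_i(\partial\mathcal{L}/\partial y_i)\bigl(\phi_i'(x^L_j)+u^L_{ij}b'(x^L_j)\bigr)$ is cleaner than the paper's display, which sums over $j$. You diverge on \emph{why} $G_S$ is negligible:
\begin{itemize}
\item The paper asserts $|G_L|\gg|G_S|$ outright. It cites compact support of the $B_k'$ (few nonzero terms) against the dense SiLU derivative, plus sign cancellation from $\int B_k'=0$. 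It then points to the small-KAN experiment and extends to all earlier layers by induction.
\item You instead use the B-spline derivative identity, which gives $|\phi_i'(x)|\le\max_k|v_{ik}-v_{i,k-1}|/h$ on a uniform grid. This is a quantitative form of the paper's cancellation heuristic: a constant coefficient vector has zero derivative. Compact support alone does not bound the size of $\sum_k v_{ik}B_k'(x)$.
\end{itemize}
Your route identifies exactly what controls the discarded term: coefficient roughness divided by grid spacing, which is small under near-zero spline initialization. It also makes explicit that the claim is conditional, since fine grids or rough learned splines can break it. The paper's route is shorter and claims all preceding layers by induction, but it is heuristic at the key inequality. Both ultimately rest on the same empirical check.

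Your explicit ratio $C\|\Delta v\|/(h\|u\|\inf|b'|)$ needs repair, for three reasons.
\begin{itemize}
\item $\mathrm{SiLU}'(x)=\sigma(x)\bigl(1+x(1-\sigma(x))\bigr)$ vanishes at $x\approx-1.28$ and tends to $0$ as $x\to-\infty$. So $\inf|b'|=0$, and the bound is vacuous unless you restrict the range of $\mathbf{x}^L$.
\item The residual component at $x^L_j$ is $b'(x^L_j)(u^\top g)_j$ with $g=\partial\mathcal{L}/\partial\mathbf{y}$. $\|u\|$ is not a lower bound for $\|u^\top g\|$; for $n_{\text{out}}>n_{\text{in}}$, $u^\top$ has a nontrivial kernel.
\item Dominance at $\mathbf{x}^L$ does not automatically survive multiplication by $J^\top$, where $J=\partial\mathbf{x}^L/\partial\theta^{L-1}$ (summed over the batch).
\end{itemize}
The clean sufficient condition is $\|J^\top G_S\|<\|J^\top G_L\|$. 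This makes the inner product of the true and detached gradients positive, and your spline-derivative lemma then bounds the numerator.
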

	\begin{proof}
		{Step 1: Grad-free Spline does not alter the gradients of $v^L$ and $u^L$ in the current layer $L$.}  
		According to forward process Eq. \ref{SaKAN}, the gradients of $u^L$ and $v^L$ are
		\begin{equation}
			\label{grad1}
			\frac{\partial y_i}{\partial v^L_{ik}} = \sum_{j=1}^{n_{\text{in}}} B_k(x^L_j), \quad  
			\frac{\partial y_i}{\partial u^L_{ij}} = b(x^L_j)
		\end{equation}
		which depend only on the spline bases $B_k(\cdot)$, not on their derivatives. Thus, detaching spline gradients does not affect the optimization of $u^L$ and $v^L$.
		
		{Step 2: It does not hinder the optimization of preceding-layer $L-1$ gradients $v^{L-1}$ and $u^{L-1}$.}  
		For layer $L-1$, the gradient of its parameters is
		\begin{equation}
			\frac{\partial y}{\partial v^{L-1}_{ik}} 
			= \frac{\partial y}{\partial \mathbf{x}^{L}} 
			\frac{\partial \mathbf{x}^{L}}{\partial v^{L-1}_{ik}}, \quad 
			\frac{\partial y}{\partial u^{L-1}_{ij}} 
			= \frac{\partial y}{\partial \mathbf{x}^{L}} 
			\frac{\partial \mathbf{x}^{L}}{\partial u^{L-1}_{ij}}
		\end{equation}
		where the first term $\frac{\partial y}{\partial \mathbf{x}^{L}}$ is the layer-$L-1$ output gradient, and the second term follows the same form as Eq.~\ref{grad1}. The gradient of $\mathbf{x}^L$ decomposes as
		\begin{equation}
			\frac{\partial y}{\partial \mathbf{x}^{L}} 
			= \underbrace{\sum_{k=1}^{n_{\text{spline}}} v^L_{ik} 
				\sum_{j=1}^{n_{\text{in}}} B'_k(x^L_j)}_{G_S} 
			+ \underbrace{\sum_{j=1}^{n_{\text{in}}} u^L_{ij} \, b'(x^L_j)}_{G_L}
		\end{equation}
		where $G_S$ is propagated through the B-\textbf{S}pline, and $G_L$ is propagated through the \textbf{L}inear residual connection.
		Notably, the gradient magnitude of $G_S$ is significantly smaller than that of $G_L$, as:
		\begin{equation}
			\left|\frac{\partial y}{\partial \mathbf{x}^{L}} \right| \approx \lvert G_L\rvert \gg \lvert G_S\rvert
		\end{equation}
		indicating that spline gradients $G_S$ have negligible effect on dominant gradient propagation, because:
		(a) {Local support and Gradient sparsity.}  
		Each spline basis has compact support, so only a few derivatives $B'_k(x)$ are nonzero for a given input. 
		In contrast, the $b'(x)$ (defaulting to SILU) is nonzero almost everywhere, leading to a dense gradient.  
		(b) {Cancellation effect.} The $B'_k(x)$ takes both positive and negative values, and these values are symmetric, with $\int B'_k(x)dx = 0$. When aggregated across inputs, these opposing contributions may cancel out. In contrast, the positive values of $b'(x)$ (SiLU) exceed the negative values, leading to constructive accumulation.\\
		This theoretical observation is also supported by empirical evidence from small-scale KANs (with dimensions $(4, 128, 128, 1)$ and random inputs), as shown in Fig.~\ref{fig:wrap_subfig2}. 
		
		By induction, this argument extends to all preceding layers, proving that Grad-Free Spline preserves stable training in deep KANs.
	\end{proof}
	\begin{wrapfigure}{r}{.7\linewidth} 
		\centering
		\vspace{-1em}
		\begin{subfigure}[b]{.69\linewidth}
			\centering
			\includegraphics[width=\linewidth]{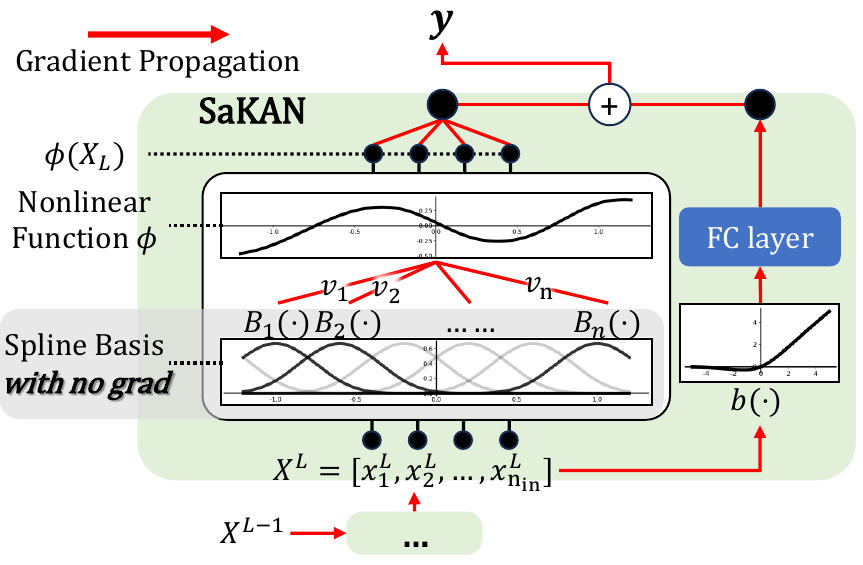}
			\caption{Grad-Free Spline.}
			\label{fig:wrap_subfig1}
		\end{subfigure}
		\begin{subfigure}[b]{.30\linewidth}
			\centering
			\includegraphics[width=\linewidth]{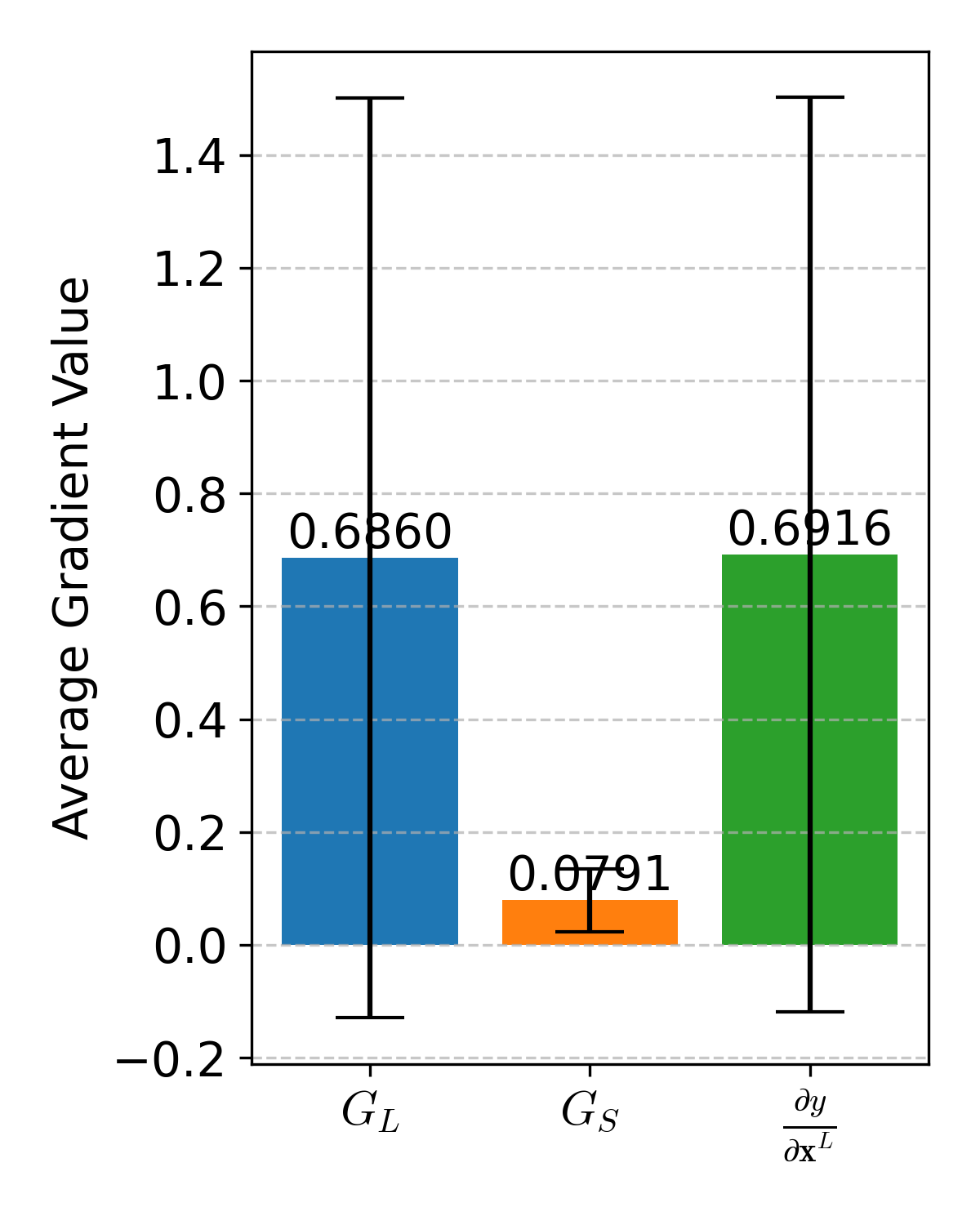}
			\caption{The spline gradient contributes little to training.}
			\label{fig:wrap_subfig2}
		\end{subfigure}
		\caption{ Unlike vanilla KANs, Grad-Free Spline shows that detaching spline basis gradients maintains training stability, reduces computational memory, and improves efficiency.}
		\label{fig:wrap_two_subfigs}
		\vspace{-2.5em}
	\end{wrapfigure}
	After detaching gradients, memory usage can be further reduced via chunk-wise computation of the grad-free spline basis. The trade-off between chunk size and runtime efficiency is analyzed in ablation experiments.
	
	\subsection{ALL U-KAN}  
	\label{3.4}
	
	
	Based on SaKAN (Sec. \ref{3.11}) and Grad-Free Spline (Sec. \ref{3.3}), ALL U-KAN is the first representative implementation of a fully KA-based deep model, demonstrating that our KA-based layers can entirely replace traditional layers.
	
	Inspired by KAN \citep{KAN} and KANConv \citep{KANconv}, we propose the KA and KAonv layers. The KA layer replaces the FC layer by applying Grad-Free Spline to the SaKAN. Its parameter complexity is $\mathcal{O}(n_{\text{in}} n_{\text{out}} + n_{\text{out}} n_{\text{spline}})$, which is in same order as $\mathcal{O}(n_{\text{in}} n_{\text{out}})$ of FC layer, and is significantly smaller than $\mathcal{O}(n_{\text{in}} n_{\text{out}} n_{\text{spline}})$ of vanilla KAN layer. In this paper, \(n_{\text{spline}} = 8\) is equal to the grid size of $5$ plus the spline order of $3$.
	\begin{wrapfigure}{r}{0.60\linewidth} 
		\centering
		\includegraphics[width=\linewidth]{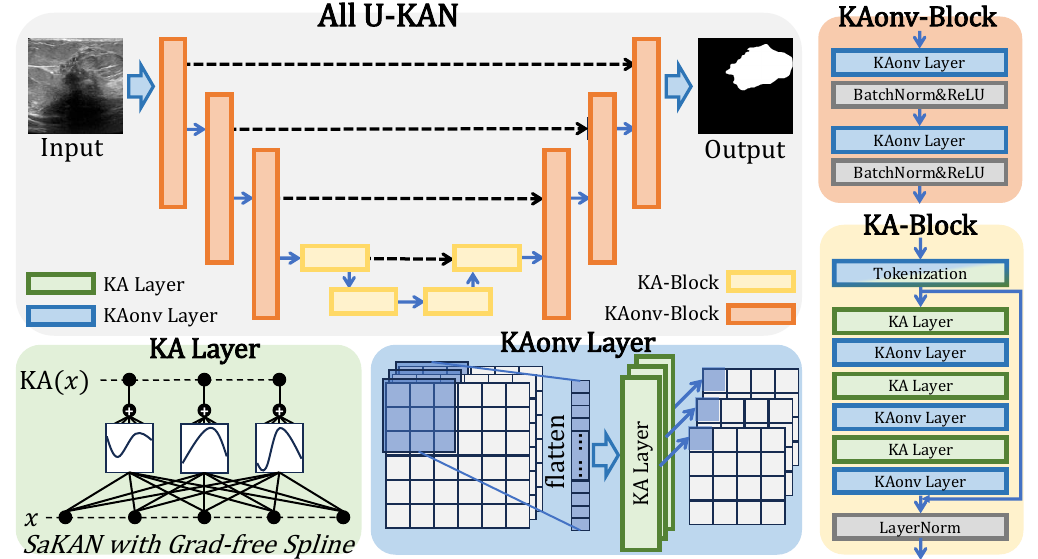}
		\caption{Overview of ALL U-KAN. The proposed KA and KAonv layers, built on SaKAN and Grad-free Splines, replace all FC and convolutional layers, yielding the first fully KA-based deep model.}
		\label{allukan}
		\vspace{-1.5em}
	\end{wrapfigure}
	The KAConv layer replaces the convolutional layer by performing convolution with kernels, unfolding the values within each receptive field, and outputting the results through a KA layer.

	Our ALL U-KAN replaces all FC and KAN layers in U-KAN \citep{Ukan} with KA layers and substitutes every convolutional layer with KAonv layers, resulting in a fully KA-based deep architecture comprising $31$ KAonv layers and $12$ KA layers (details in Fig. \ref{allukan}). This design establishes the first deep model built entirely from KA-based components while keeping the parameter count and memory usage comparable to those of conventional architectures. In this way, it effectively overcomes the long-standing limitations that previously prevented KAN models from scaling to deep networks.
	\section{Experiments}
	In this section, our proposed ALL U-KAN demonstrates superior performance across all three medical image segmentation tasks, demonstrating that deeper KA-based networks possess superior learning capabilities over traditional architectures in complex tasks.
	
	\subsection{Datasets}
	There are three medical image segmentation datasets used to comprehensively evaluate the effectiveness of our methods. 
	
	\textbf{BUSI dataset} \citep{busi}: This dataset contains ultrasound images of normal, benign, and malignant breast cancer cases, along with corresponding segmentation masks. A total of 647 benign and malignant breast tumor images were used, with all images resized to $256 \times 256$. 
	\textbf{GlaS dataset} \citep{Glas}: This dataset consists of 612 standard-definition (SD) frames ($384 \times 288$) from 31 sequences collected from 23 patients at the Hospital Clinic of Barcelona. Following common practice \citep{liu2024rolling,Ukan}, 165 images were selected and resized to $512 \times 512$. 
	\textbf{CVC-ClinicDB} \citep{cvc}: This public dataset is designed for polyp detection in colonoscopy videos. It comprises 612 images ($384 \times 288$) from 31 colonoscopy sequences, with all images uniformly resized to $256 \times 256$.  
	
	All datasets were randomly divided into 80\% training and 20\% validation sets, consistent with the U-KAN~\cite{Ukan}.
	Results are based on $3$ random runs with the random seed 2981, 6142, and 1187. 
	
	\subsection{Experimental Settings}
	
	All experiments were implemented on a single NVIDIA RTX 3090 GPU. 
	For every dataset, the batch size was set to $8$, with an initial learning rate of $1 e{-4}$, optimized using Adam with a cosine annealing learning rate scheduler (minimum $1 e{-5}$). 
	The loss function combined Binary Cross-Entropy (BCE) and Dice loss, and all models were trained for $400$ epochs.  
	
	To validate the effectiveness of the proposed method, comprehensive comparisons were performed against state-of-the-art U-shaped architectures for medical image segmentation. 
	The baselines included the convolution-based U-Net~\citep{unet}, U-Net++~\citep{unet++}, the attention-based Att-UNet~\citep{attunet}, the Mamba-based U-Mamba~\citep{umamba}, the advanced MLP-based U-NeXt~\citep{Unext}, Rolling-UNet~\citep{liu2024rolling}, as well as the KAN-incorporated U-KAN~\citep{Ukan}.  
	
	Evaluation metrics covered segmentation accuracy (Intersection-over-Union (IoU) and F1 score), network size (parameter count), and training efficiency (GPU memory consumption and runtime). 
	This comprehensive evaluation ensures a rigorous assessment of both performance and efficiency across all competing models.  
	
	\subsection{Qualitative and Quantitative Comparison}
	\begin{figure*}[t]
		\centering
		\includegraphics[width=1\linewidth]{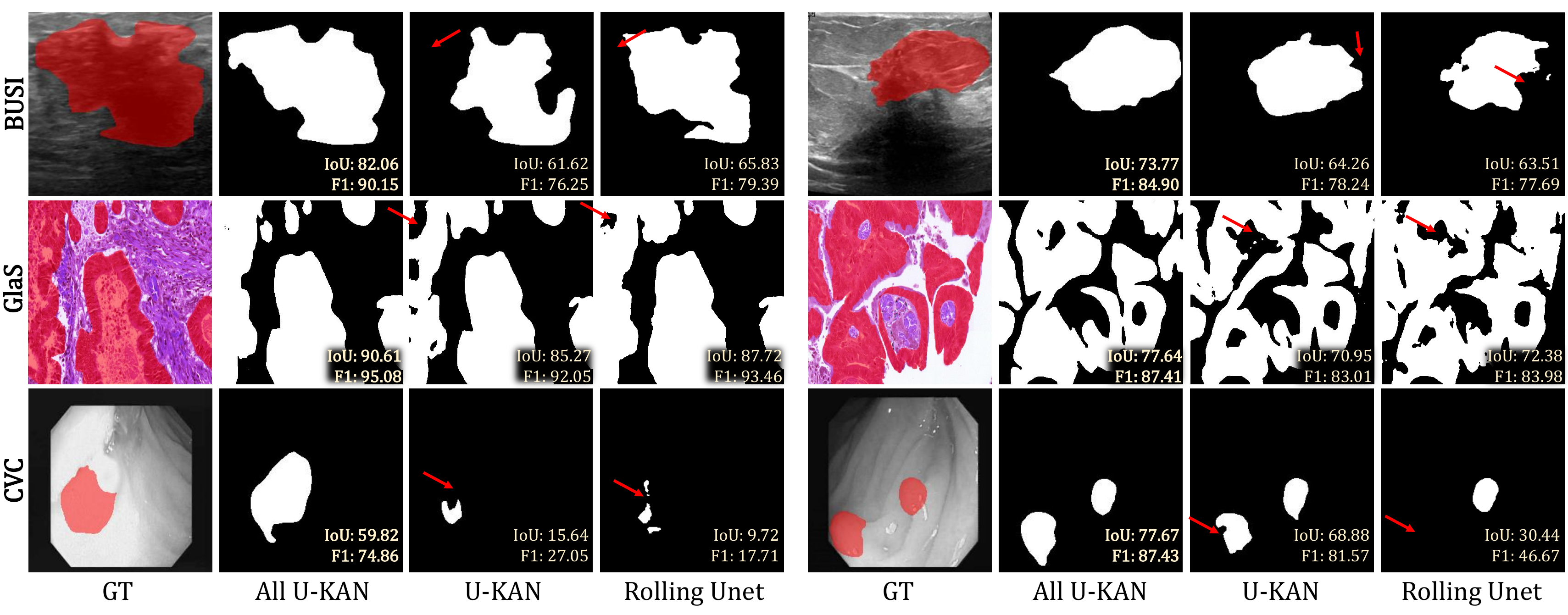} %
		\caption{Qualitative comparison. Our ALL U-KAN demonstrates superior segmentation performance overall.}
		\label{VS}
	\end{figure*}
	
	Our fully KA-based deep model achieves superior segmentation performance. Tab. \ref{vs} and \ref{ablation1} demonstrate that our All U-KAN achieves substantial improvements across all three datasets. Benefiting from the stronger feature learning capability provided by more KA-based layers, our model significantly outperforms U-KAN in the average results across three data splits, with IoU increases of $2.76$, $1.08$, and $0.32$, and F1-score improvements of $2.24$, $0.63$, and $0.10$. 
	In particular, on the BUSI dataset with the random split seed set to 2981, the IoU and F1 scores improve by $4.17$ and $3.37$, respectively, as shown in Tab. \ref{ablation1}.
	Figure \ref{VS} illustrates that our model’s predictions are closer to the ground truth and maintain higher accuracy even when other models fail to segment correctly, further confirming that a deep model composed entirely of KA layers can achieve markedly superior segmentation performance.
	\subsection{Ablation Study}
	
	\begin{table*}[t]
		\centering
		\small
		\caption{Comparison with state-of-the-art methods on BUSI, GlaS, and CVC datasets. Results are based on three fixed-seed random data splits.}
		\label{vs}
		\begin{tabular}{lcccccc}
			\toprule
			\multirow{2}{*}{Methods} & 
			\multicolumn{2}{c}{BUSI } & 
			\multicolumn{2}{c}{GlaS} & 
			\multicolumn{2}{c}{CVC} \\
			\cmidrule(lr){2-3} \cmidrule(lr){4-5} \cmidrule(lr){6-7}
			& IoU$\uparrow$ & F1$\uparrow$ & IoU$\uparrow$ & F1$\uparrow$ & IoU$\uparrow$ & F1$\uparrow$ \\
			\midrule
			U-Net & 57.22$\pm$4.74 & 71.91$\pm$3.54 & 86.66$\pm$0.91 & 92.79$\pm$0.56 & 83.79$\pm$0.77 & 91.06$\pm$0.47 \\
			Att-Unet & 55.18$\pm$3.61 & 70.22$\pm$2.88 & 86.84$\pm$1.19 & 92.89$\pm$0.65 & 84.52$\pm$0.51 & 91.46$\pm$0.25 \\
			U-Net++ & 57.41$\pm$4.77 & 72.11$\pm$3.90 & 87.07$\pm$0.76 & 92.96$\pm$0.44 & 86.11$\pm$1.47 & 91.53$\pm$0.88 \\
			U-NeXt& 59.06$\pm$1.03 & 73.08$\pm$1.32 & 84.51$\pm$0.37 & 91.55$\pm$0.23 & 74.83$\pm$2.44 & 85.36$\pm$0.17 \\
			Rolling-UNet& 61.00$\pm$0.64 & 74.67$\pm$1.24 & 86.42$\pm$0.96 & 92.63$\pm$0.62 & 82.87$\pm$1.42 & 90.48$\pm$0.83 \\
			U-Mamba& 61.81$\pm$3.24 & 75.55$\pm$3.01 & 87.01$\pm$0.39 & 93.02$\pm$0.24 & 84.79$\pm$0.58 & 91.63$\pm$0.39 \\
			U-KAN& {63.38$\pm$2.83} & {76.40$\pm$2.90} & {87.64$\pm$0.32} & {93.37$\pm$0.16} & {85.05$\pm$0.53} & {91.88$\pm$0.29} \\
			ALL U-KAN & \textbf{66.14$\pm$5.43} & \textbf{78.64$\pm$5.33}& \textbf{88.72$\pm$0.23} & \textbf{94.00$\pm$0.07}  &\textbf{85.37$\pm$0.06} & \textbf{91.98$\pm$0.02} \\
			\bottomrule
		\end{tabular}
	\end{table*}
	
	\begin{table*}[t]
		\centering
		\small
		\caption{Ablation study on each proposed component. $\checkmark$ denotes the inclusion of a component. ``U-KAN$_\text{MLP}$'' corresponds to U-KAN with all KAN layers replaced by MLP.
			The random data splitting seed is 2981, with a batch size of 8 on the BUSI dataset.}
		\label{ablation1}
		\begin{tabular}{lccccccc}
			\toprule
			\textbf{Methods} & SaKAN & Grad-free & KAonv & IoU $\uparrow$ & F1 $\uparrow$ & Params $\downarrow$&  GPU Mem. $\downarrow$ \\
			\midrule
			\textcircled{a} U-KAN$_\text{MLP}$ & -- &  --   & --    & 63.49 & 77.07 & 2.76M &2.49GB\\
			\textcircled{b} U-KAN & -- &  --   & --    & 65.26 & 78.75 & 6.35M & 2.52GB \\
			\textcircled{c} &  $\checkmark$  & --    & --    & 65.16 & 78.60 & 2.78M &  2.49GB \\
			\textcircled{d} &  --   &  $\checkmark$  & --    & 64.46 & 77.94 & 6.35M  &2.51GB\\
			\textcircled{e}  &  $\checkmark$  &  $\checkmark$  & --    & 64.74 & 78.24 & 2.78M&2.49GB \\
			\textcircled{f} &  --  & --  & $\checkmark$    & -- & -- & 30.34M &  $\gg$24GB \\
			\textcircled{g} &  --  &  $\checkmark$  & $\checkmark$    & 65.73 & 78.98 & 30.34M & 15.77GB \\
			\textcircled{h} &  $\checkmark$  &  --  & $\checkmark$    & -- & -- & 3.17M &  $\gg$24GB \\
			\textcircled{i} with $\lambda_j$ &  $\checkmark$  &  $\checkmark$  &  $\checkmark$  & 65.27 & 78.73 & 3.21M &  14.51GB \\
			\textcircled{j} ALL U-KAN &  $\checkmark$  &  $\checkmark$  &  $\checkmark$  & \textbf{69.43} & \textbf{81.82} & 3.17M &  4.44GB\\
			\bottomrule
		\end{tabular}
		\vspace{-0.5em}
	\end{table*}

	This section evaluates each proposed component as shown in Tab.~\ref{ablation1}: 
	(1) \emph{SaKAN reduces parameters while maintaining learning capability.} The \textcircled{c} achieves comparable performance to \textcircled{b} with $57\%$ fewer parameters, and \textcircled{j} outperforms \textcircled{g} using only $10\%$ parameters. 
	(2) \emph{Grad-free Spline lowers memory usage with minimal performance loss.} 
	\textcircled{d} outperforms \textcircled{a} while slightly inferior to \textcircled{b}, indicating that KANs without spline gradients still outperform MLPs. 
	\textcircled{e} outperforms \textcircled{d}, shows SaKAN exhibits greater robustness to spline gradients. 
	For deeper stacks, this paper compared memory requirements of \textcircled{f} and \textcircled{h} on batch size of $1$, as they exceeded the device capacity, \textcircled{f} required 12.19 GB, \textcircled{8} required 10.48 GB, whereas our \textcircled{j} consumed only 0.597 GB, representing a reduction of more than $20\times$.
	(3) \emph{SaKAN + Grad-free Spline scale KANs deeper.} Without them, \textcircled{f} and \textcircled{h} cannot train, and \textcircled{g} uses far more memory than \textcircled{a} and \textcircled{b}.  
	(4) \emph{Deep KANs improve performance.} Fully KA-based \textcircled{j} outperforms both partial-KAN \textcircled{b} and MLP \textcircled{a}, while \textcircled{g} surpasses \textcircled{d}.  
	(5) \emph{Weights $\lambda_j$ in Eq. \ref{KAV} are unnecessary.} \textcircled{i} underperforms \textcircled{j}, tends to overfit, and increases memory, so SaKAN adopts a weight-free design, and confirms the statement in Sec. \ref{3.2}.
	\begin{wraptable}{r}{0.5\linewidth} 
		\centering
		\small
		\caption{The relationship between chunk size, memory usage, and training speed. Batch size is $8$ on the BUSI dataset.}
		\label{ablation2}
		\begin{tabular}{lccc}
			\toprule
			{Method} & Chunk & GPU Mem. & Time/step \\
			\midrule
			U-KAN$_\text{MLP}$ & --  & 2.49GB & 0.24s \\
			U-KAN & --  & 2.52GB & 0.27s \\
			Ours  & 8   & 3.82GB & 2.27s \\
			Ours  & 32  & 4.44GB & 1.37s \\
			Ours  & 64  & 6.20GB & 1.22s \\
			Ours  & 512 & 12.59GB & 1.14s \\
			\bottomrule
		\end{tabular}
		\vspace{-2em}
	\end{wraptable}
	
	Since our spline basis computation does not require storing gradients during training, this study adopts a chunk-wise computation strategy to further reduce memory consumption, as shown in Table~\ref{ablation2}. Notably, smaller chunk only increases runtime modestly while significantly reducing memory usage without affecting training performance.
	By default, the chunk size is set to $32$ for input resolutions of $256 \times 256$ and $16$ for $512 \times 512$.

	\section{Conclusion}
	This paper breaks the computational barriers that have prevented KANs from scaling to deeper, laying a solid foundation for future research in deep learning.
	(1) Based on Sprecher’s variant of the KA theorem, we propose the SaKAN, which reduces training difficulty and enhances learning capacity.
	(2) Our Grad-Free Spline greatly reduces computational cost by detaching the spline basis gradients, enabling the practical deployment of deeper KANs.
	(3) Building on these innovations, we present the ALL U-KAN, which is is the first representative implementation of fully KA-based deep model.
	Experiments on three medical image segmentation tasks demonstrate that KA-based layers can fully substitute FC and Conv layers in deep learning while exhibiting superior performance.
	Although training KA-based models remains slower than traditional models due to spline computations, the speed is still acceptable in practice. Future research may focus on accelerating spline computations, which are independent of training because Grad-Free Spline detaches its gradient, to further improve training efficiency.
	
	\bibliographystyle{splncs04}
	\bibliography{reference}
	
	\appendix
	\section{LLM Usage Statement}
	We used a large language model (LLM) solely for grammar correction and minor language polishing of the manuscript. The LLM did not contribute to research ideation, methodology, analysis, or the generation of scientific content. All scientific contributions are entirely those of the authors.

\end{document}